\title[Neural network integral representations]{Neural network integral representations with the ReLU activation function}
\newcommand{\R}{\mathbb{R}}
\newcommand{\CC}{\mathbb{C}}
\newcommand{\Sd}{\mathbb{S}^d}
\newcommand{\dd}{\,\textrm d}		
\begin{document}

\maketitle

\begin{abstract}%
In this effort, we derive a formula for the integral representation of a shallow neural network with the ReLU activation function. 
We assume that the outer weighs admit a finite $L_1$-norm with respect to Lebesgue measure on the sphere.
For univariate target functions we further provide a closed-form formula for all possible representations.
Additionally, in this case our formula allows one to explicitly solve the least $L_1$-norm neural network representation for a given function.
\end{abstract}

\begin{keywords}%
shallow neural network, integral representation, Radon transform, Hilbert transform.
\end{keywords}

\section{Introduction}%
In this paper we consider the problem of approximating a target function (e.g., an image classifier, solution to a partial differential equation, a specific parameter associated with a model, etc.) by a neural network.
The goal is to obtain, e.g., construct and train, a neural network that approximates the target function.

We propose to address this problem by the so-called ``integral representation'' technique.
The main ingredient of our method is to obtain a shallow network as an appropriate discretization of an integral representation of the objective function.
Specifically, we provide an approach to recast the $d$-dimensional function as an integral of a particular weight function $c : \mathbb{R}^d \times \mathbb{R} \to \mathbb{R}$ over the $d+1$ dimensional unit sphere. Based on the available training data, 
we approximate such integrals by a discrete sum, which, in turn, yields the desired network architecture.

Additionally, we introduce the space $\mathcal{W}(\R)$, that fully characterizes the class of functions that admit the desired integral representation.  Moreover, this allows us to solve the least $L_1$-norm network representation, i.e., the neural network with the minimal $L_1$-norm of the outer weights.
We note that the characterization of the multi-dimensional analogue of this space remains an open question; see, e.g., Remark~\ref{rem:W_multid} for more detailed information.

\subsection{Motivation}
Artificial neural networks were first introduced in the 1940's as mathematical models for describing biological networks, and with the invention of the back-propagation method for training neural networks~\cite{rumelhart1988learning} in the mid 1980's, the mathematical community's interest in this area spiked.
Though research in the field was slowly diminishing by the end of the 1990's, advancements in computational tools during the last decade have led to a revival of interest in this field, as deeper architectures have been observed to perform better than shallow ones~\cite{poggio2017and}, and faster GPU's accelerated the deep network training processes since, e.g., computations can be done in parallel using a network-type dataflow structure.

Nevertheless, despite recent developments in both theory and practical tools, many fundamental questions regarding the construction and training of (even shallow) neural networks still remain unanswered.
The unavailability of theoretical insight naturally implicates the numerous real-life challenges associated with the implementation and deployment of neural networks, are amplified.  These challenges include:
\begin{itemize}
	\item the choice of the network architecture is often dictated by a heuristic rather than the available data, which typically results in an underperforming or an over-complicated network;
	\item the model generally lacks interpretabilty in the sense that the contributions of individual nodes are generally unclear; and
	\item Backpropagation-based training is often more computationally expensive than necessary due to the network overparametrization and a sub-optimal initialization strategy. Moreover, the learning process is sensitive to the initial conditions, i.e., an initialization scheme and a choice of hyperparameters, and can result in a bad local minimum.
\end{itemize}

In this paper we attempt to tackle these issues by exploiting a more theoretical framework for understanding neural network approximations.
Specifically, we consider integral representations of shallow networks in order to analyze the relationship between the class of target functions and the corresponding tangible approximating networks.

\subsection{Integral representations of shallow neural networks}\label{sec:integral_neural_net}
A {\it shallow neural network} with an activation $\sigma : \R \to \R$ and $m$ nodes is a function $L : \R^d \to \R^{d^\prime}$ of the form 
\begin{equation}\label{eq:disc_NN}
	L(x) = \sum_{j=1}^m c_j \, \sigma(a_j \cdot x + b_j),
\end{equation}
where $a_j \in \R^d$, $b_j \in \R$, and $c_j\in \R^{d^\prime}$ are called the inner weights, biases, and outer weights, respectively.

In this paper we consider the ReLU (rectified linear unit) activation, given by $\sigma(z) = \max\{z,0\}$, which seems to be the conventional choice of activation in most modern architectures.
A neural network with the ReLU activation is a computationally simple parametric family since propagating an input through any such network essentially requires only matrix multiplications, which is a highly-optimized easily-parallelizable procedure.

Our main approach is to think of a shallow network $L : \R^d \to \R^{d^\prime}$ as a discretization of a suitable integral representation of the target function $f : \R^d \to \R^{d^\prime}$.
More precisely, a shallow neural network is regarded as a discretization of an integral of the form 
\begin{equation}\label{eq:int_formula_borel}
	f(x) = \int\limits_{\R^d\times \R} \sigma(a\cdot x+b) \dd\nu(a,b),
\end{equation}
where $\nu : \R^d \times \R \to \R^{d^\prime}$ is an appropriate Radon measure.
In particular, a network~\eqref{eq:disc_NN} with $m$  nodes  can be written as \eqref{eq:int_formula_borel} for an atomic measure $\nu$ with $m$ atoms so this type of representation is quite general.

We believe that such an approach opens many research opportunities, potentially leading up to faster and more stable algorithms for neural network training, and to architectures best fitted for specific problems. 
The integral form of neural network representations is more concise and better suited for analysis, thus allowing to address the questions of the architecture expressibility and the network approximation.
From numerical perspective, utilizing an appropriate discretization method allows one to obtain a fully-trained network that approximates the target function, potentially bypassing the backpropagation-based learning process.
Finally, an architecture obtained via the integral discretization can be treated as an initial state of the network, in place of the conventionally used random-based weights initialization, and hence can be further fine-tuned with an optimization algorithm.

The goal of this project is two-fold: first, we further develop existing analytical tools for neural network integral representations; second, we aim to facilitate the learning process by analyzing existing integral representations of neural networks and their integral discretizations.

\subsection{Related work}
Neural network integral representations have been considered by various authors, where typically the Radon measure $\nu$ is assumed to be of a special form, e.g. supported on a given set or absolutely continuous with respect to a probability measure.
One specific type of integral  representation for neural network integral representations  discussed below originates from the harmonic analysis perspective to shallow neural networks and employs the ridgelet transform; see e.g.~\cite{candes1999harmonic}.
There it is assumed that the target function $f$ can be written as
\begin{equation}\label{eq:dual_ridge_rep}
	f(x) = \int\limits_{\mathbb{R}^d \times \mathbb{R}} c(a,b) \, \sigma(a \cdot x+b) \dd a\dd b
	=: \mathcal{R}^\dagger_\sigma c(x).
\end{equation}
Function $\mathcal{R}^\dagger_\sigma c(x)$ is called the dual ridgelet transform of the function $c(a,b)$ with respect to $\sigma$.
The `direct' transform $\mathcal{R}_\tau f(a,b)$, called the ridgelet transform of $f(x)$ with respect to $\tau : \R \to \CC$, is given by
\begin{equation}\label{eq:ridge_rep}
	\mathcal{R}_\tau f(a,b) := \int\limits_{\mathbb{R}^d} f(x) \, \overline{\tau(a \cdot x + b)} \dd x.
\end{equation}
It is shown in~\cite{sonoda2017neural} that if the pair $(\sigma,\tau)$ satisfies the {\it admissibility condition}
\[
	(2\pi)^{d-1} \int\limits_\mathbb{R} \frac{\hat\sigma(\omega) \, \overline{\hat\tau(\omega)}}{|\omega|^d} \dd \omega = 1,
\]
then the {\it reconstruction formula} $\mathcal{R}^\dagger_\sigma \mathcal{R}_\tau (f) = f$ holds, thus providing a particular integral representation of the target function $f$:
\[
	f(x) = \int\limits_{\mathbb{R}^d \times \mathbb{R}} \int\limits_{\mathbb{R}^d} f(x) \, \overline{\tau(a \cdot x + b)} \, \sigma(a \cdot x + b) \dd x\dd a\dd b.
\]
In~\cite{ito1991representation}, using a Radon inversion formula, the author proves that for Heaviside and sigmoid-like activation functions, every objective function $f$ in the Schwartz class has a representation
\[
	f(x) = \int\limits_\R \int\limits_{\mathbb{S}^{d-1}} c(a,b) \, \phi(a \cdot x + b) \dd \nu(a) \dd b,
\]
where $\nu$ is some probability measure on the unit sphere in $\R^d$.
In~\cite{kuurkova1997estimates, kainen2000integral, kainen2010integral} the authors prove integral representation results of the form
\[
	f({x}) = \int\limits_{\R^{d+1}} c(a,b) \, \phi(a \cdot {x} + b) \dd a \dd b
\]
and use it to get error estimates for neural network approximations with Heaviside activation function.
Since Heaviside function is the derivative of ReLU this is highly relevant to our work, and we employ some of their results in the proof of Theorem~\ref{thm:main_multi_inv}.

Largely motivated by the works~\cite{barron1993universal} and~\cite{klusowski2016risk}, in~\cite{ma2019barron} the Barron spaces were introduced, which are defined as the space of functions $f : \R^d \to \R$ admitting the representation
\begin{equation*}\label{eq:barron_cond}
	f(x) = \int\limits_{\R \times \R^d \times \R} c \, \phi(a \cdot x + b) \dd \nu(a,b,c)
	\ \ \text{for all}\ \ 
	x \in [-1,1]^d,
\end{equation*}
where $\nu$ denotes the space of probability measures on $\R \times \R^d \times \R$.
Note that in a Barron space the representation is restricted to the unit cube $[-1,1]^d$, whereas we require the representation to be on the whole $\R^d$.

While the above integral representations are largely considered with the aim of obtaining estimates on the size of the approximating network, methods for discretizing integral representations of the form~\eqref{eq:int_formula_borel} have been considered by various authors aiming to obtain desirable approximation rate.
In particular, in~\cite{bengio2006convex} the authors employ a greedy method to discretize the solution of~\eqref{eq:int_formula_borel} with the smallest total variation norm, and in~\cite{bach2017breaking} the same problem is solved by the conditional gradient algorithm.
In~\cite{pao1994learning,pao1995functional} the authors suggested the random vector functional-link (or RVFL) network method, which includes Monte--Carlo sampling for the values of the parameters $(a,b)$ and least square regularization for the values of outer weights $c$.
A related Monte--Carlo discretization method for integral representations of radial basis function (RBF) neural networks is considered in~\cite{mhaskar2004tractability}.
Lastly, in~\cite{sonoda2013nonparametric} integral representations are used to get better weight initialization.

The major difference of our approach is that we are aiming to fully characterize the class of functions that allow neural network integral representation, and to use that representation to get a meaningful interpretation to network weights (e.g., dependence on second derivative in one dimensional case as stated in Theorem~\ref{thm:main_inv}).

\subsection{Our approach}
While the representation~\eqref{eq:int_formula_borel} can generally be stated for a wider class of  measures $\nu$, in this paper we restrict ourselves to the case of Lebesgue measures, which seems appropriate from a harmonic analysis viewpoint.

Due to the positive homogeneity of the ReLU activation function the representation~\eqref{eq:disc_NN} can be rewritten with the weights $(a_j,b_j)$ on the unit sphere $\Sd = \{(a,b)\in\R^{d+1} : \|a_j\|^2 + |b_j|^2 = 1\}$.
In this setting, we consider target functions $f$ that admit integral representations of the form
\begin{equation}\label{form:sphere_rep}\tag{$\star$}
	f(x) = \int\limits_{\Sd} c(a,b) \, \sigma(a \cdot x + b) \dd\nu_d(a, b),
\end{equation}
where $\nu_d$ is the Lebesgue measure on $\Sd$ and $c\in L_1(\Sd,\nu_d)$ with $L_1(\Sd,\nu_d)$ denoting the class of all Lebesgue-integrable functions on $\Sd$ with respect to $\nu_d$.
Note that the integral in~\eqref{form:sphere_rep} is well defined since for every $x\in\R^d$ the function $\sigma(a \cdot x + b) $ is bounded on $\Sd$.

In this paper we address the following challenges, which we fully solve in a case of univariate target functions:
\begin{enumerate}
	\item[(a)] Characterize the class of target functions that admit the representation~\eqref{form:sphere_rep};
	\item[(b)] For a given $f$ find all weight functions $c \in L_1(\Sd,\nu_d)$ for which $\eqref{form:sphere_rep}$ holds;
	\item[(c)] Find the least $L_1$-norm solution to $\eqref{form:sphere_rep}$ for a given target function.
\end{enumerate}
We recognize that a similar approach is employed in~\cite{pmlr-v99-savarese19a, 2019arXiv191001635O}, however, to the best of our knowledge, the characterization results in Section~\ref{subsec:characterization} involving the space $\mathcal{W}(\R)$ are novel and presented in this effort for the first time.

\section{Main results}
We begin this section by recalling the following well-known definitions.
The Radon transform of a function $f\in L^1(\R^d)$ is a mapping $\mathcal{R}[f] : \R^{d+1} \to \R^d$ given by the formula:
\[
	\mathcal{R}[f](a,b) := \int\limits_{a \cdot x + b = 0} f(x) \dd\nu_{d-1},
\]
where integration is with respect to $(d-1)$-dimensional Lebesgue measure $\nu_{d-1}$ on the hyperplane $\{x \in \R^{d} : a \cdot x + b = 0\}$.
The Hilbert transform $\mathcal{H : \R \to \R}$ of a function $g : \R \to \R$ is defined as
\[
	\mathcal{H}[g](b) := \frac{1}{\pi} \text{ p.v.} \int\limits_{-\infty}^\infty \frac{g(z)}{b-z} \dd z.
\]
We now formulate one of our main results which provides a particular weight function $c(a,b)$ for the integral representation~\eqref{form:sphere_rep}.
\begin{theorem}\label{thm:main_multi_inv}
For any compactly supported function $f\in C^{d+1}(\R^d)$ define
\[
	c_f(a,b) = \left\{\begin{array}{ll}
		\displaystyle{\frac{(-1)^{\nicefrac{d+1}{2}}}{2(2\pi)^{d-1}} \frac{1}{\|a\|^{d+2}} \,
		\frac{\partial^{d+1}}{\partial b^{d+1}} \mathcal{R}[f](a,b) \, \sigma(a \cdot x + b)}
		&\quad \text{if } d \text{ is odd};
		\medskip\\
		\displaystyle{\frac{(-1)^{\nicefrac{d}{2}}}{2(2\pi)^{d-1}} \frac{1}{\|a\|^{d+2}} \,
		\frac{\partial^{d+1}}{\partial b^{d+1}} \mathcal{H}\big[ \mathcal{R}[f](a,b) \big](b) \, \sigma(a \cdot x + b)}
		&\quad \text{if } d \text{ is even}.
	\end{array}\right.
\]
Then we have
\[
	f(x) = \int\limits_{\Sd} c_f(a,b) \dd \nu_d(a,b).
\]
\end{theorem}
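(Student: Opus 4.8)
The plan is to build the representation from three ingredients: the classical Radon inversion (filtered back-projection) formula, the distributional identity that the second derivative of the ReLU is the Dirac delta, and the positive homogeneity of $\sigma$ together with a change of variables carrying the cylinder $\mathbb{S}^{d-1}\times\R$ onto the sphere $\Sd$. The guiding observation is that $\eqref{form:sphere_rep}$ is an integral over the $d$-dimensional manifold $\Sd$, which has the same dimension as the back-projection integral $\int_{\mathbb{S}^{d-1}}\int_\R(\cdots)\dd s\,\dd\theta$ produced by Radon inversion; the two integrals will be matched by homogeneity, and the odd/even split will be inherited directly from the inversion formula.

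First I would invoke the Radon inversion formula, as used in the cited works \cite{kainen2000integral,kainen2010integral}, to express any compactly supported $f\in C^{d+1}(\R^d)$ as a filtered back-projection. For a unit direction $\theta\in\mathbb{S}^{d-1}$ and offset $s\in\R$, writing $g_\theta(s):=\mathcal{R}[f](\theta,s)$, this reads $f(x)=c_d\int_{\mathbb{S}^{d-1}}\big[\partial_s^{\,d-1} g_\theta\big](-\theta\cdot x)\dd\theta$ when $d$ is odd, and the same with $\partial_s^{d-1}$ replaced by $\mathcal{H}\partial_s^{d-1}$ when $d$ is even; this single formula is the origin of both the parity dichotomy and of the Hilbert transform in the statement. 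The second step converts the pointwise back-projection into an integral against $\sigma$: for fixed $\theta$ one has $\partial_s^2\,\sigma(\theta\cdot x+s)=\delta(\theta\cdot x+s)$ distributionally, so for any smooth compactly supported $h$ integrating by parts twice gives $h(-\theta\cdot x)=\int_\R h''(s)\,\sigma(\theta\cdot x+s)\dd s$, the boundary terms vanishing because $f$, hence $g_\theta$ and all its derivatives, is compactly supported. Applying this with $h=\partial_s^{d-1}g_\theta$ turns the back-projection into $\int_{\mathbb{S}^{d-1}}\int_\R \partial_s^{d+1}\mathcal{R}[f](\theta,s)\,\sigma(\theta\cdot x+s)\dd s\,\dd\theta$ (up to the constant, and with $\mathcal{H}$ inserted for even $d$). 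This is exactly where two derivatives are added to the $d-1$ coming from inversion, producing the order $d+1$ and explaining the $C^{d+1}$ hypothesis.

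It remains to transport the cylinder integral onto $\Sd$. I would use the bijection $(\theta,s)\mapsto(a,b)=(\theta,s)/\sqrt{1+s^2}$ from $\mathbb{S}^{d-1}\times\R$ onto $\Sd$ minus its two poles, under which $\|a\|=(1+s^2)^{-\nicefrac12}$ and $\theta\cdot x+s=\|a\|^{-1}(a\cdot x+b)$. Two homogeneity effects then combine: the surface measure satisfies $\dd s\,\dd\theta=\|a\|^{-(d+1)}\dd\nu_d(a,b)$, and positive homogeneity gives $\sigma(\theta\cdot x+s)=\|a\|^{-1}\sigma(a\cdot x+b)$, for a total factor $\|a\|^{-(d+2)}$ that matches the prefactor in the statement. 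Rewriting the offset derivative $\partial_s^{d+1}$ as $\partial_b^{d+1}\mathcal{R}[f]$ then produces precisely $c_f$.

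The main obstacle is the bookkeeping rather than any single hard idea: one must track the normalizing constants and signs through the inversion formula and the two integrations by parts (reconciling $(-1)^{\nicefrac{(d\pm1)}{2}}$ with the chosen Radon convention), treat the even and odd cases uniformly, and---most delicately---fix the meaning of $\partial_b^{d+1}\mathcal{R}[f](a,b)$, since $\mathcal{R}[f]$ is homogeneous of degree zero and the derivative intended here is the one along the scalar offset rather than the partial at fixed $a$. I would also verify the integrability of $c_f$ and justify Fubini throughout via compact support and the $C^{d+1}$ regularity of $f$. As a consistency check, in $d=1$ the construction collapses to $f(x)=c_1\int_\R f''(s)\,|x-s|\dd s$, and since $\partial_x^2|x-s|=2\delta(x-s)$ this recovers $f$ with $|c_1|=\frac12$, agreeing with the stated constant and with the second-derivative interpretation highlighted in Theorem~\ref{thm:main_inv}.
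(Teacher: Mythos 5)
Your proposal is correct and follows essentially the same route as the paper: the Radon inversion formula from \cite{kainen2000integral,kainen2010integral} with its odd/even (Hilbert transform) dichotomy, integration by parts in the offset variable to pass from the back-projection to an integral against $\sigma$, and the degree-$(-(d+2))$ homogeneity bookkeeping that transports the cylinder integral over $\mathbb{S}^{d-1}\times\R$ onto $\Sd$ (the paper's Lemma~\ref{lem:dim_reduce}). The only cosmetic difference is that the paper quotes the Heaviside reconstruction formula (Lemma~\ref{lem:reconstruction_reference}) and integrates by parts once, whereas you start from the pointwise inversion formula and integrate by parts twice, and the paper runs the change of variables from the sphere to the cylinder rather than the reverse.
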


The stated theorem offers a way to construct a specific weight function $c(a,b)$, for which the integral representation~\eqref{form:sphere_rep} holds.
A similar result was proved in~\cite{kainen2010integral} for the Heaviside function, which is the derivative of ReLU.
In the next section we show that for the univariate target functions the particular $c_f(a,b)$ provided by Theorem~\ref{thm:main_multi_inv} has the least $L_1$-norm among all possible solutions $c\in L_1(\Sd,\nu_d)$.
While previously we conjectured that this is likely to be the case for any dimension $d > 1$ as well, it is now evident that this conjecture in fact holds and the proof can be derived (after a small adaptation to our setting) from~\cite{2019arXiv191001635O}, that was posted concurrently with our work.

\subsection{Univariate target functions}\label{subsec:characterization}
For the case $d = 1$ we state a stronger version of Theorem~\ref{thm:main_multi_inv} that characterizes the class of target functions $f(x)$ that can be represented in the form \eqref{form:sphere_rep}.

Note that the unit circle $\mathbb{S}^1$ can be parameterized by $(a,b) = (\cos\phi,\sin\phi)$, where $\phi\in[0,2\pi)$.
Then the hyperplane $\{x \in \R : a \cdot x + b = 0\}$ consists of a single point: $x = -\tan\phi $ and hence $\mathcal{R}[f](a,b)=f(-\tan\phi )$. 
Thus Theorem~\ref{thm:main_multi_inv} provides for any compactly supported function $f\in C^2(\R)$ that
\[
	f(x) = \int\limits_{0}^{2\pi} \frac{f^{\prime\prime}(-\tan\phi )}{2|\cos^3\phi|} \,\sigma(x\cos\phi + \sin\phi) \dd\phi.
\]
In this subsection we provide a more general representation and extend the set of the admissible functions $f$ by defining the class $\mathcal{W}(\mathbb{R}) \supset C^2(\R)$ consisting of such functions $g : \R \to \R$ that $g^\prime$ exists everywhere on $\R$, $g^{\prime\prime}$ exist almost everywhere on $\R$, and
\[
	\lim\limits_{x \to \pm\infty} g(x) = \lim\limits_{x \to \pm\infty} x g^\prime(x) = 0,
	\quad
	\int\limits_{-\infty}^\infty |g^{\prime\prime}(x)| \sqrt{1 + x^2} \dd x < \infty.
\]

The following theorem characterizes the class of target functions that admits the integral representation~\eqref{form:sphere_rep} with an integrable weight function $c(a,b)$.
Moreover, for a given target function $f$ we characterize the class of integrable kernels $c$ that satisfy the representation~\eqref{form:sphere_rep}.
\begin{theorem}\label{thm:main_inv}
The function $f$ admits the representation
\begin{equation}\label{form:circ_rep}
	f(x) = \int\limits_{0}^{2\pi} c(\phi) \, \sigma(x\cos\phi + \sin\phi ) \dd\phi
\end{equation}
with some $c \in L_1[0,2\pi]$ if and only if $f$ has the form
\begin{equation}\label{form:Wrep}
	f(x) = g(x) + \alpha x + \beta + \gamma(x \arctan x + 1) + \eta\arctan x,
\end{equation}
where $g \in \mathcal{W}(\mathbb{R})$ and
\begin{gather*}
	\alpha = \frac{2}{\pi}\int\limits_{0}^{2\pi} c(\phi) \cos\phi  \dd\phi,
	\quad
	\beta = \frac{2}{\pi}\int\limits_{0}^{2\pi} c(\phi) \sin\phi  \dd\phi,
	\quad
	\gamma = \int\limits_{0}^{2\pi} c(\phi) \, |\cos\phi| \dd\phi,
	\\
	\eta = \int\limits_{0}^{2\pi} c(\phi) \, s(\phi) \dd\phi
	\ \text{ with }\ 
	s(\phi) = s(\phi+\pi) = \sin\phi 
	\ \text{ for }\ 
	\phi \in [\nicefrac{-\pi}{2}, \nicefrac{\pi}{2}).
\end{gather*}
Moreover, the set of such weight functions $c(\phi)$ coincides with the set of functions of the form
\begin{equation}\label{eq:weight_function_form}
	\frac{f^{\prime\prime}(-\tan\phi )}{2|\cos^3\phi|}
	+ \frac{k^{\prime\prime}(-\tan\phi )}{2\cos^3\phi}
	+ \alpha \cos\phi  + \beta \sin\phi + \gamma \, |\cos\phi| + \eta \, s(\phi),
\end{equation}
where $k \in \mathcal{W}(\mathbb{R})$ and $\alpha, \beta, \gamma, \eta \in \mathbb{R}$.
\end{theorem}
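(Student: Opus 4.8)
The plan is to study the linear map $c\mapsto f$ by pairing it against the second derivative and by exploiting the antipodal symmetry $\phi\mapsto\phi+\pi$ of the circle. Writing $z=x\cos\phi+\sin\phi$ and grouping $\phi$ with $\phi+\pi$, the identities $\sigma(z)+\sigma(-z)=|z|$ and $\sigma(z)-\sigma(-z)=z$ show that the part $c_s$ of $c$ that is even under this symmetry produces the curved piece $\int_0^\pi c_s(\phi)\,|x\cos\phi+\sin\phi|\dd\phi$, whereas the odd part $c_a$ produces only the affine function $x\int_0^\pi c_a\cos\phi\dd\phi+\int_0^\pi c_a\sin\phi\dd\phi$. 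I would therefore split every weight accordingly and treat the two parts in parallel. The engine for the even part is the distributional identity $\frac{\partial^2}{\partial x^2}\sigma(x\cos\phi+\sin\phi)=|\cos\phi|\,\delta(x+\tan\phi)$.

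First I would handle the even part. Differentiating twice and collapsing the delta under the substitution $t=-\tan\phi$ (a two-to-one map of the circle onto $\R$ with $|\cos\phi|=(1+t^2)^{-1/2}$) gives the pointwise relation $f''(x)=2|\cos^3\phi|\,c_s(\phi)$ at $x=-\tan\phi$, so $c_s(\phi)=f''(-\tan\phi)/(2|\cos^3\phi|)$, the $d=1$ weight of Theorem~\ref{thm:main_multi_inv}. The same substitution yields the norm identity $\int_0^{2\pi}|c_s(\phi)|\dd\phi=\int_{-\infty}^\infty|f''(t)|\sqrt{1+t^2}\dd t$, so membership $c_s\in L_1$ is equivalent to the weighted integrability in the definition of $\mathcal{W}(\R)$. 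The crux is then the asymptotic analysis of the even piece: since $|x\cos\phi+\sin\phi|\sim|x|\,|\cos\phi|$, reconstructing it from $f''$ determines it only up to the family of growth profiles that weighted integrability permits, and I would show this defect is spanned exactly by $x\arctan x+1$ and $\arctan x$ — their second derivatives $2(1+x^2)^{-2}$ and $-2x(1+x^2)^{-2}$ meet the weighted bound, while the functions themselves fail the limits $\lim g=\lim xg'=0$ of $\mathcal{W}(\R)$, contributing the even-linear ($\gamma$) and bounded-odd ($\eta$) components. Together with the affine $\alpha x+\beta$ from the odd part this gives \eqref{form:Wrep}, and the coefficients follow by pairing $c$ against the biorthogonal system $\{\cos\phi,\sin\phi,|\cos\phi|,s(\phi)\}$, whose images $\frac{\pi}{2}x,\ \frac{\pi}{2},\ x\arctan x+1,\ \arctan x$ fix the normalizing constants (the odd functions $\cos\phi,\sin\phi$ annihilate $c_s$ and the even functions $|\cos\phi|,s(\phi)$ annihilate $c_a$).

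For the converse I would reverse these steps. Given $f$ as in \eqref{form:Wrep}, the weight $g''(-\tan\phi)/(2|\cos^3\phi|)$ lies in $L_1$ by the norm identity (as $g\in\mathcal{W}(\R)$) and reproduces $g$ by Theorem~\ref{thm:main_multi_inv}, and adding $\gamma|\cos\phi|+\eta s(\phi)$ together with a suitable multiple of $\alpha\cos\phi+\beta\sin\phi$ supplies the remaining terms through the images recorded above; note $g''(-\tan\phi)/(2|\cos^3\phi|)+\gamma|\cos\phi|+\eta s(\phi)=f''(-\tan\phi)/(2|\cos^3\phi|)$, which is how \eqref{eq:weight_function_form} is assembled. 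To obtain all admissible weights I would then determine the kernel $\{c:\int c\,\sigma=0\}$. Its even part is trivial, since $c_s\mapsto f''$ is injective by the pointwise relation; its odd part is exactly $\{k''(-\tan\phi)/(2\cos^3\phi):k\in\mathcal{W}(\R)\}$, as a direct computation gives $\int_0^{2\pi}\frac{k''(-\tan\phi)}{2\cos^3\phi}\,\sigma\dd\phi=\frac{1}{2}\int_{-\infty}^\infty k''(t)(x-t)\dd t=0$, where the limits $\lim k=\lim xk'=0$ are precisely what force the boundary terms $\int k''$ and $\int tk''$ to vanish, and conversely any odd kernel element is recovered by setting $k''(t)=2c_a(\phi)\cos^3\phi$ and integrating twice. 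Adding this kernel to the particular solution yields \eqref{eq:weight_function_form}.

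I expect the asymptotic step to be the main obstacle: isolating the four special functions, proving that the weighted-$L_1$ condition is exactly the right one, and controlling the improper behavior at $\phi=\pi/2,3\pi/2$ where $\cos\phi=0$. A secondary technical point is justifying the distributional computation of $f''$ for merely $L_1$ weights rather than the smooth compactly supported $f$ of Theorem~\ref{thm:main_multi_inv}, which I would handle by an approximation argument.
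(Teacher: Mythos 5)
Your skeleton is the paper's: the antipodal even/odd split of $c$ is exactly the paper's use of $\sigma(z)=\frac{|z|}{2}+\frac{z}{2}$; the pointwise relation $f''(-\tan\phi)=\big(c(\phi)+c(\phi+\pi)\big)|\cos^3\phi|$ and the change of variables $t=-\tan\phi$ giving $\int_0^{2\pi}|c_s(\phi)|\dd\phi=\int_\R|f''(t)|\sqrt{1+t^2}\dd t$ are the paper's identities; the images of $\{\cos\phi,\sin\phi,|\cos\phi|,s(\phi)\}$ are computed the same way; and your identification of the odd part of the kernel with $k''(-\tan\phi)/(2\cos^3\phi)$, $k\in\mathcal{W}(\R)$, is precisely the paper's construction via the sign-flipped weight $\bar c$. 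So this is the same route, not a different one.

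The genuine gap is the step you yourself flag as ``the main obstacle'': the claim that the asymptotic defect is spanned exactly by $x\arctan x+1$ and $\arctan x$ is the actual content of the forward implication, and your proposal only asserts it. Weighted integrability of $f''$ by itself only says that $f$ is asymptotically affine at each end (a four-parameter family of slopes and intercepts); it does not tell you that the transform of a weight with vanishing moments produces the \emph{zero} asymptote, i.e.\ that $\lim_{x\to\pm\infty}f(x)=\lim_{x\to\pm\infty}xf'(x)=0$ after normalization. The paper proves this by using the moment conditions to rewrite
\[
f(x)=-x\int\limits_{-\pi/2}^{-\arctan x}\big(c(\phi)+c(\phi+\pi)\big)\cos\phi\dd\phi-\int\limits_{-\pi/2}^{-\arctan x}\big(c(\phi)+c(\phi+\pi)\big)\sin\phi\dd\phi
\]
(and the analogous formula for $f'$), absorbing the factor $x$ via $|\cos\phi|\le(1+x^2)^{-1/2}$ on the shrinking interval, and invoking absolute continuity of the integral of the $L_1$ function $c$. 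This is elementary, but without it the theorem is not established; your ``growth profile'' heuristic must be converted into exactly this estimate, or equivalently into a computation of the asymptotic slopes and intercepts in terms of the four moments, which is the same work. A secondary point: the approximation argument you propose to justify the distributional computation of $f''$ for merely integrable $c$ is unnecessary --- the paper differentiates once under the integral by dominated convergence to get the displayed formula for $f'$, and then differentiates that a.e.\ at Lebesgue points of the integrand, yielding the pointwise formula for $f''$ with no distributions involved.
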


The following theorem characterizes the class $\mathcal{W}(\R)$ as the functions that admit an integral representation with an appropriate integrable weight function.
\begin{theorem}\label{thm:WR_rep}
A function $f$ belongs to the class $\mathcal{W}(\R)$ if and only if it admits the representation
\[
	f(x) = \int\limits_{0}^{2\pi} c(\phi) \, \sigma(x \cos\phi + \sin\phi ) \dd\phi
\]
with a weight function $c \in L_1[0,2\pi]$ satisfying
\[
	\int\limits_{-\nicefrac{\pi}{2}}^{\nicefrac{\pi}{2}} c(\phi) \, \cos\phi \dd\phi
	= \int\limits_{-\nicefrac{\pi}{2}}^{\nicefrac{\pi}{2}} c(\phi+\pi) \, \cos\phi \dd\phi
	= \int\limits_{-\nicefrac{\pi}{2}}^{\nicefrac{\pi}{2}} c(\phi) \, \sin\phi \dd\phi
	= \int\limits_{-\nicefrac{\pi}{2}}^{\nicefrac{\pi}{2}} c(\phi+\pi) \, \sin\phi \dd\phi
	= 0.
\]
\end{theorem}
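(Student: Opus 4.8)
The plan is to derive both implications directly from Theorem~\ref{thm:main_inv}, which already expresses every representable $f$ in the canonical form~\eqref{form:Wrep} and lists all admissible weights via~\eqref{eq:weight_function_form}. The conceptual heart of the argument is the observation that the four conditions imposed on $c$ are \emph{equivalent} to the vanishing of the four scalar coefficients $\alpha,\beta,\gamma,\eta$ that Theorem~\ref{thm:main_inv} attaches to $f$, and that these coefficients vanish precisely when the non-$\mathcal{W}(\R)$ part of~\eqref{form:Wrep} (the span of $x$, $1$, $x\arctan x+1$, $\arctan x$) disappears, leaving $f=g\in\mathcal{W}(\R)$. Thus the theorem is essentially a specialization of Theorem~\ref{thm:main_inv} once the integral conditions are translated into the coefficient formulas.

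For the ``if'' direction I assume $f$ admits~\eqref{form:circ_rep} with some $c\in L_1[0,2\pi]$ satisfying the four identities, and I apply Theorem~\ref{thm:main_inv} to put $f$ in the form~\eqref{form:Wrep} with the stated coefficient formulas. First I would rewrite each coefficient integral over $[0,2\pi]$ as a sum of two integrals over $[-\pi/2,\pi/2]$ by splitting the circle into its right and left halves and substituting $\phi\mapsto\phi+\pi$ on the left half. Using $\cos(\phi+\pi)=-\cos\phi$, $\sin(\phi+\pi)=-\sin\phi$, the fact that $|\cos(\phi+\pi)|=|\cos\phi|=\cos\phi$ on $[-\pi/2,\pi/2]$, and $s(\phi+\pi)=s(\phi)=\sin\phi$, this expresses $\alpha$ and $\gamma$ respectively as (a multiple of) the difference and the sum of $\int_{-\pi/2}^{\pi/2}c(\phi)\cos\phi\,\dd\phi$ and $\int_{-\pi/2}^{\pi/2}c(\phi+\pi)\cos\phi\,\dd\phi$, while $\beta$ and $\eta$ become the difference and the sum of the corresponding sine integrals. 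Since all four half-circle integrals vanish by hypothesis, we get $\alpha=\beta=\gamma=\eta=0$, so~\eqref{form:Wrep} collapses to $f=g$ with $g\in\mathcal{W}(\R)$, i.e. $f\in\mathcal{W}(\R)$.

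For the ``only if'' direction I assume $f\in\mathcal{W}(\R)$. Then $f$ is of the form~\eqref{form:Wrep} with $g=f$ and $\alpha=\beta=\gamma=\eta=0$, hence representable, and by the ``moreover'' clause of Theorem~\ref{thm:main_inv} (with $k\equiv0$ and all free constants zero) the explicit weight $c_0(\phi)=\frac{f''(-\tan\phi)}{2|\cos^3\phi|}$ reproduces $f$. It then remains to check $c_0\in L_1[0,2\pi]$ and that $c_0$ satisfies the four conditions. Under $t=-\tan\phi$ one has $\frac{\dd\phi}{\cos^2\phi}=-\dd t$ and $\frac{1}{\cos\phi}=\sqrt{1+t^2}$ on $(-\pi/2,\pi/2)$, which turns $\int_{-\pi/2}^{\pi/2}|c_0(\phi)|\dd\phi$ into $\frac{1}{2}\int_\R|f''(t)|\sqrt{1+t^2}\,\dd t<\infty$, the defining integrability of $\mathcal{W}(\R)$, and sends the two conditions involving $c_0(\phi)$ into $\frac{1}{2}\int_\R f''(t)\,\dd t$ and $-\frac{1}{2}\int_\R t\,f''(t)\,\dd t$. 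The decay hypotheses $\lim_{|x|\to\infty}f(x)=\lim_{|x|\to\infty}xf'(x)=0$ force $f'(\pm\infty)=0$, which makes the first integral vanish and kills the boundary terms $[t f'(t)]$ and $[f(t)]$ after one integration by parts in the second. Finally, since $\tan$ and $|\cos|$ are $\pi$-periodic, $c_0(\phi+\pi)=c_0(\phi)$, so the two remaining conditions coincide with the two just verified; this exhibits a weight satisfying all four constraints.

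The main obstacle I anticipate is bookkeeping rather than conceptual: correctly matching the half-circle integrals in the hypothesis to the full-circle coefficient formulas of Theorem~\ref{thm:main_inv}, keeping careful track of the sign flips produced by $\phi\mapsto\phi+\pi$ and of the distinction between $\cos^3\phi$ and $|\cos^3\phi|$; and verifying that the two decay conditions in the definition of $\mathcal{W}(\R)$ are exactly what is needed to annihilate the boundary terms arising in the $\int_\R t\,f''$ computation. Since both directions are conditional on Theorem~\ref{thm:main_inv}, no new analytic machinery should be required beyond these change-of-variables and integration-by-parts checks.
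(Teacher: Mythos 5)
Your proposal is correct and follows the route the paper intends: the paper gives no separate proof of this theorem, treating it as a corollary of Theorem~\ref{thm:main_inv}, whose proof already contains exactly the ingredients you use (the half-circle orthogonality conditions~\eqref{cond:orthtotrig}, the sum/difference identification of $\alpha,\beta,\gamma,\eta$ with the four integrals, the $t=-\tan\phi$ substitution for the $L_1$ bound, and the vanishing of $\int f''$ and $\int t\,f''$ via the decay hypotheses). Your translation of the conditions to the interval $[-\nicefrac{\pi}{2},\nicefrac{\pi}{2}]$ is the right bookkeeping --- the paper's condition~\eqref{cond:orthtotrig} as written over $[0,\pi]$ is evidently a typo for that interval, since the splitting at $\pm\nicefrac{\pi}{2}$ is what the subsequent computations actually use.
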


Since in Theorem~\ref{thm:WR_rep} we make an assumption $c \in L_1[0,2\pi]$, we can pose a question of finding the weight function $c(a,b)$ with the smallest $L_1$-norm for a given target function $f$.
Such a formulation is of interest for many real-life applications as regularization is typically employed to condition ill-posed problems (see, e.g., \cite{engl1996regularization, evgeniou2000regularization}).
In particular, $L_1$-norm minimization is commonly used in compressed sensing for finding a sparse solution, and is often utilized in machine learning for promoting generalization properties of the network.
In the following theorem we answer the stated question.

\begin{theorem}\label{thm:least_l1}
For $f \in \mathcal{W}(\R)$ the minimum
\begin{equation}\label{eq:min_c_1_norm}
	\min\limits_{c\in L_1[0,2\pi]} \|c\|_1
	\quad\text{s.t.}\quad
	\int\limits_{0}^{2\pi} c(\phi) \, \sigma(x \cos\phi + \sin\phi ) \dd\phi = f(x)
\end{equation}
is attained at 
\[
	c_f(\phi) = \frac{f^{\prime\prime}(-\tan\phi )}{2|\cos^3\phi|}.
\]
\end{theorem}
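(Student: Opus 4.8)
The plan is to reduce the constrained minimization to an elementary pointwise inequality on the real line. First I would pin down the feasible set. Since $f \in \mathcal{W}(\R)$, the decomposition \eqref{form:Wrep} of $f$ is forced to be trivial, namely $g = f$ and $\alpha = \beta = \gamma = \eta = 0$ (the functions $x$, $1$, $x\arctan x + 1$, $\arctan x$ are linearly independent modulo $\mathcal{W}(\R)$, as no nonzero combination of them decays to $0$ at both infinities). Hence, by Theorem~\ref{thm:main_inv}, the function $c_f(\phi) = \frac{f''(-\tan\phi)}{2|\cos^3\phi|}$ is feasible; it lies in $L_1[0,2\pi]$ because $\|c_f\|_1 = \int_{-\infty}^{\infty} |f''(x)|\sqrt{1+x^2}\,\dd x < \infty$ by the very definition of $\mathcal{W}(\R)$. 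Moreover, every feasible $c$ differs from $c_f$ by a weight function representing the zero function, and by Theorem~\ref{thm:main_inv} applied to $f \equiv 0$ those are exactly $h(\phi) = \frac{k''(-\tan\phi)}{2\cos^3\phi}$ with $k \in \mathcal{W}(\R)$. It therefore suffices to prove $\|c_f + h\|_1 \ge \|c_f\|_1$ for all such $h$.

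Second, I would pass to the real line through the substitution $u = -\tan\phi$, splitting the circle into the upper arc $\phi \in (-\nicefrac{\pi}{2}, \nicefrac{\pi}{2})$, where $|\cos^3\phi| = \cos^3\phi$, and the lower arc $\phi \in (\nicefrac{\pi}{2}, \nicefrac{3\pi}{2})$, where $|\cos^3\phi| = -\cos^3\phi$. The decisive bookkeeping is that $c_f$ carries the \emph{even} factor $1/|\cos^3\phi|$ while the perturbation $h$ carries the \emph{odd} factor $1/\cos^3\phi$, so on the upper arc the two combine as $f'' + k''$ and on the lower arc as $f'' - k''$. Using $1/|\cos\phi| = \sqrt{1+u^2}$ together with the resulting Jacobian, this gives
\[
	\|c_f + h\|_1 = \frac{1}{2}\int_{-\infty}^{\infty} \bigl| f''(u) + k''(u) \bigr| \sqrt{1+u^2}\,\dd u + \frac{1}{2}\int_{-\infty}^{\infty} \bigl| f''(u) - k''(u) \bigr| \sqrt{1+u^2}\,\dd u.
\]

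Third, I would invoke the pointwise inequality $|A + B| + |A - B| \ge 2|A|$ with $A = f''(u)$ and $B = k''(u)$, multiplied by the nonnegative weight $\sqrt{1+u^2}$ and integrated, to conclude
\[
	\|c_f + h\|_1 \ge \int_{-\infty}^{\infty} |f''(u)| \sqrt{1+u^2}\,\dd u = \|c_f\|_1,
\]
with equality attained at $k \equiv 0$, i.e.\ at $c = c_f$, which proves the claim.

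The main obstacle is the first step, namely the correct identification of the feasible perturbations and their sign structure. What makes the argument succeed is precisely that the zero-representing perturbations must carry the signed factor $1/\cos^3\phi$ rather than the unsigned $1/|\cos^3\phi|$ of $c_f$; this antisymmetry across the two semicircles is what turns $\|c_f + h\|_1$ into the sum of an $|f'' + k''|$ term and an $|f'' - k''|$ term and lets the triangle inequality close the estimate. Establishing this — in particular that no $\cos\phi$, $\sin\phi$, $|\cos\phi|$, or $s(\phi)$ component may be added without moving the represented function off $f$ — is exactly where the characterization in Theorem~\ref{thm:main_inv} (equivalently, Theorem~\ref{thm:WR_rep}) is indispensable.
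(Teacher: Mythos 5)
Your proposal is correct and follows essentially the same route as the paper: identify the feasible set via Theorem~\ref{thm:main_inv} as $c_f(\phi) + \frac{k''(-\tan\phi)}{2\cos^3\phi}$, change variables $u = -\tan\phi$ on the two semicircles to get $\frac12\int|f''+k''|\sqrt{1+u^2}\,\dd u + \frac12\int|f''-k''|\sqrt{1+u^2}\,\dd u$, and apply $|A+B|+|A-B|\ge 2|A|$. Your explicit justification that $\alpha=\beta=\gamma=\eta=0$ is forced for $f\in\mathcal{W}(\R)$ is a detail the paper leaves implicit, but the argument is otherwise identical.
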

\begin{remark}
We note that the solution of~\eqref{eq:min_c_1_norm} is not always unique.
For instance, if $f^{\prime\prime}(x) > 0$ a.e. then any $g \in \mathcal{W}(\R)$ with $|g^{\prime\prime}| \le f^{\prime\prime}$ provides a weight function with the smallest possible norm. 
\end{remark}

\begin{remark}\label{rem:W_multid}
A similar to Theorem~\ref{thm:least_l1} result was obtained in~\cite{pmlr-v99-savarese19a} and its multidimensional analogue in~\cite{2019arXiv191001635O}.
Their results are stated in the $\mathbb{S}^{d-1} \times \R$ domain of $(a,b)$, which corresponds to a different scaling of the weights.
After performing respective change of variables from $\mathbb{S}^{d-1} \times \R$ to $\Sd$, one of the theorems in~\cite{2019arXiv191001635O} implies that in fact the weight function $c_f$ provided by Theorem~\ref{thm:main_multi_inv} indeed possesses the smallest $L_1$-norm in any setting $d > 1$.
However the question of finding the analogue of the space $\mathcal{W}(\R)$ in multiple dimensions remains open still.
\end{remark}

\section{Conclusion}
This effort focused on integral representations of shallow neural networks with ReLU activation functions.
Specifically, we recast a target function in a suitable integral form, which can be discretized in order to obtain a network approximation of the training data.

We analyze the set of target functions that admit the desired integral form and derive an explicit formula for the integrand.
Moreover, in the univariate setting, we fully characterize all such functions as the class $\mathcal{W}(\mathbb{R})$, and establish an approach for obtaining a network with the least $L_1$-norm of the outer weights, for any function from $\mathcal{W}(\mathbb{R})$.

Our approach facilitates a ``deeper" theoretical understanding of how the network weights' contribute to the approximation of the training data. We believe that it is vital to bridge the gap between practical applications and underlying theoretical processes, and hope that the tools presented in this work will contribute to providing solutions to this grand challenge.
We intend to continue research in this direction and further promote the interpretability of neural networks by better understanding how the geometry of the training data affects the architecture and the training process of the neural network.

\acks{This material is based upon work supported in part by: the U.S. Department of Energy, Office of Science, Early Career Research Program under award number ERKJ314; U.S. Department of Energy, Office of Advanced Scientific Computing Research under award numbers ERKJ331 and ERKJ345; the National Science Foundation, Division of Mathematical Sciences, Computational Mathematics program under contract number DMS1620280; and by the Laboratory Directed Research and Development program at the Oak Ridge National Laboratory, which is operated by UT-Battelle, LLC., for the U.S. Department of Energy under contract DE-AC05-00OR22725.}

\bibliography{NNrefs}

\appendix
\section{Proof of Theorem \ref{thm:main_multi_inv}}
\begin{lemma}\label{lem:dim_reduce}
For any $F \in L^1(\Sd)$ we have
\begin{align*}
    \int\limits_{\Sd}F(a,b)\dd\nu_d(a,b)
      & = \int\limits_0^\pi\sin^{d-1} \phi \int\limits_{\mathbb{S}^{d-1}} 
        F\left(\alpha \sin\phi, \cos\phi\right) \dd\nu_{d-1}(\alpha) \dd\phi\\
        &=\int\limits_{\mathbb{S}^{d-1}} \int\limits_\R\frac{1}{\sqrt{1+\beta^2}^{d+1}}
        F\bigg(\frac{\alpha}{\sqrt{1+\beta^2}}, \frac{\beta}{\sqrt{1+\beta^2}}\bigg) \dd\nu_{d-1}(\alpha) \dd\beta.
\end{align*}
\end{lemma}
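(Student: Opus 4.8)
The plan is to prove the two equalities separately: the first is the standard ``latitude slicing'' decomposition of the surface measure on $\Sd$, and the second follows from it by a one-dimensional change of variables. Throughout I will ignore the two poles $b = \pm 1$, which form a $\nu_d$-null subset of $\Sd$, and I will use that a diffeomorphic change of variables away from a null set preserves $L^1$-integrability, so the manipulations below are justified for any $F \in L^1(\Sd)$.

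For the first equality, I would parametrize $\Sd$ by its last coordinate. Every point $(a,b) \in \Sd$ with $b \neq \pm 1$ can be written uniquely as $(a,b) = (\alpha \sin\phi, \cos\phi)$ with $\phi \in (0,\pi)$ and $\alpha \in \mathbb{S}^{d-1}$: the choice $b = \cos\phi$ fixes $\phi$, and then $\|a\| = \sqrt{1 - b^2} = \sin\phi > 0$, so $\alpha := a/\sin\phi \in \mathbb{S}^{d-1}$. The surface-area element decomposes along this parametrization: moving in $\phi$ traces an arc of the unit circle and contributes $\dd\phi$, while each latitude $\{b = \cos\phi\}$ is a sphere $\mathbb{S}^{d-1}$ of radius $\sin\phi$ lying orthogonal to the $\phi$-direction and contributing $\sin^{d-1}\phi \, \dd\nu_{d-1}(\alpha)$. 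Hence $\dd\nu_d(a,b) = \sin^{d-1}\phi \, \dd\nu_{d-1}(\alpha) \, \dd\phi$, and integrating $F$ against this and applying Fubini gives the first line. Verifying the Jacobian factor $\sin^{d-1}\phi$ — that is, checking that the induced metric block-diagonalizes with Gram determinant $\sin^{2(d-1)}\phi$ — is the one genuinely technical point; I would either invoke the classical spherical-slicing formula for $\Sd$ directly, or confirm it through the short co-area / Gram-determinant computation on the coordinate tangent vectors.

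For the second equality, I would substitute $\beta = \cot\phi$, so that $\phi = \mathrm{arccot}\,\beta$ maps $\beta \in \R$ bijectively onto $\phi \in (0,\pi)$. Using $1 + \cot^2\phi = \csc^2\phi$ together with $\sin\phi > 0$ on $(0,\pi)$, I obtain $\sin\phi = 1/\sqrt{1+\beta^2}$ and $\cos\phi = \beta/\sqrt{1+\beta^2}$, so the argument of $F$ becomes $\bigl(\alpha/\sqrt{1+\beta^2},\, \beta/\sqrt{1+\beta^2}\bigr)$, matching the right-hand side. Differentiating gives $\dd\beta = -\csc^2\phi \, \dd\phi = -(1+\beta^2)\,\dd\phi$, hence $\sin^{d-1}\phi \, \dd\phi = -\sin^{d+1}\phi \, \dd\beta = -(1+\beta^2)^{-(d+1)/2} \, \dd\beta = -\sqrt{1+\beta^2}^{-(d+1)} \, \dd\beta$. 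The orientation reversal — $\phi : 0 \to \pi$ corresponds to $\beta : +\infty \to -\infty$ — cancels the minus sign once the limits are flipped, and substituting these expressions into the first line produces exactly the second line.

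The only real obstacle is pinning down the Jacobian $\sin^{d-1}\phi$ in the first equality; everything after that is routine single-variable calculus. I would therefore lead with the spherical-slicing decomposition (citing or briefly computing it), and then deduce the second equality from the first via the substitution $\beta = \cot\phi$ above.
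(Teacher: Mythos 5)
Your proposal is correct and follows essentially the same route as the paper: the paper establishes the first equality by writing out the full spherical-coordinate parametrization (whose area element $\sin^{d-1}\phi_1\sin^{d-2}\phi_2\cdots\sin\phi_{d-1}\,\dd\phi_1\cdots\dd\phi_d$ is exactly your latitude-slicing factor once the inner angles are identified with $\dd\nu_{d-1}(\alpha)$), and then performs the identical substitution $\beta=\cot\phi$ for the second equality. Your handling of the Jacobian, the orientation reversal, and the null set at the poles all match what the paper does or leaves implicit.
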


\begin{proof}
Statement of the lemma is trivial for $d = 1$.
For $d > 1$ consider the following change of variables given by the spherical coordinates
\begin{align*}
	b &= \cos\phi_1,
	\\
    a_1 &= \sin\phi_1 \sin\phi_2 =: \sin\phi_1 \, \alpha_1,
    \\
    &\cdots
    \\
    a_{d-1} &= \sin\phi_1 \ldots\, \sin\phi_{d-1} \cos\phi_d =: \sin\phi_1 \, \alpha_{d-1},
    \\
	a_d &= \sin\phi_1 \ldots\, \sin\phi_{d-1} \sin\phi_d =: \sin\phi_1 \, \alpha_d,
\end{align*}
where $\phi_1,\dots,\phi_{d-1}\in[0,\pi]$ and $\phi_d\in [0,2\pi)$, and $\alpha=(\alpha_1,\dots,\alpha_d)\in\mathbb{S}^{d-1}$.
The area element on the unit sphere $\Sd$ is given by $\dd\nu_d(a,b) = \sin^{d-1}\phi_1 \sin^{d-2}\phi_{2} \ldots\, \sin\phi_{d-1} \dd\phi_1 \dd\phi_2 \ldots \dd\phi_{d-1}$.
Therefore we obtain
\begin{multline*}
    \int\limits_{\Sd}F(a,b)\dd\nu_d(a,b)
    \\
    = \int\limits_0^\pi \ldots \int\limits_0^\pi \int\limits_0^{2\pi} F\big(\alpha\sin\phi_1, \cos\phi_1\big)
        \sin^{d-1}\phi_1 \sin^{d-2}\phi_{2} \ldots \sin\phi_{d-1} \dd\phi_d \ldots\, \dd\phi_2 \dd\phi_1
    \\
    = \int\limits_0^\pi \sin^{d-1}\phi_1 \int\limits_0^\pi \ldots \int\limits_0^\pi \int\limits_0^{2\pi}
        F\big(\alpha\sin\phi_1, \cos\phi_1\big) \sin^{d-2}\phi_{2} \ldots \sin\phi_{d-1} \dd\phi_d \ldots\, \dd\phi_2 \dd\phi_1.
\end{multline*}
To complete the proof, we change the variable $\phi_1 \in [0,\pi]$ to $\beta = \cot\phi_1 \in \R$ to get $\cos\phi_1 = \nicefrac{\beta}{\sqrt{1+\beta^2}}$ and $\sin\phi_1 = \nicefrac{1}{\sqrt{1+\beta^2}}$.
Substituting into the above integral provides the required result.
\end{proof}

We also use the following technical result, which is a corollary of Proposition~8.1 from~\cite{kainen2010integral}.
\begin{lemma}\label{lem:reconstruction_reference}
Let $H :\R \to \R$ be the Heaviside function, i.e. $H(x) = \nicefrac{(1 + \operatorname{sgn}(x))}{2}$. Then for any compactly supported function $f$ in $C^d(\R^d)$ the following reconstruction formula holds:
\[
    f(x) = \left\{
        \begin{array}{ll}
            \displaystyle{
            -\frac{(-1)^{\nicefrac{d+1}{2}}}{2(2\pi)^{d-1}} \int\limits_{\mathbb{S}^{d-1}}\int\limits_\R
            \frac{\partial^{d}}{\partial \beta^{d}} \mathcal{R}[f](\alpha,\beta)
            \ H(\alpha \cdot x + \beta)\dd \beta\dd\nu_{d-1}(\alpha)}
            &\quad\text{if } d \text{ is odd};
            \\
            \displaystyle{
            -\frac{(-1)^{\nicefrac{d}{2}}}{2(2\pi)^{d-1}} \int\limits_{\mathbb{S}^{d-1}}\int\limits_\R
            \frac{\partial^{d}}{\partial \beta^{d}} \mathcal{H} \big[\mathcal{R}[f](\alpha,\beta) \big](\beta)
            \ H(\alpha \cdot x + \beta)\dd \beta\dd\nu_{d-1}(\alpha)}
            &\quad\text{if } d \text{ is even}.
        \end{array}
    \right.
\]
\end{lemma}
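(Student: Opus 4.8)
The plan is to derive the Heaviside reconstruction from the filtered back-projection form of the Radon inversion formula, which is the content of Proposition~8.1 in~\cite{kainen2010integral}. Recall that for compactly supported, sufficiently smooth $f$ the inversion formula can be written as
\[
  f(x) = \frac{1}{2(2\pi)^{d-1}} \int\limits_{\mathbb{S}^{d-1}} \Lambda^{d-1} \mathcal{R}[f](\alpha, -\alpha\cdot x) \dd\nu_{d-1}(\alpha),
\]
where $\Lambda^{d-1}$ is the one-dimensional Fourier multiplier with symbol $|\omega|^{d-1}$ acting in the $b$-variable, and the evaluation point $-\alpha\cdot x$ reflects the paper's convention $\mathcal{R}[f](a,b)=\int_{a\cdot x+b=0} f$. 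The first step is to resolve $\Lambda^{d-1}$ into elementary operators according to the parity of $d$. Since $|\omega|^{d-1}=\omega^{d-1}$ for odd $d$ and $|\omega|^{d-1}=\operatorname{sgn}(\omega)\,\omega^{d-1}$ for even $d$, while $\partial_b$ has symbol $i\omega$ and $\mathcal{H}$ has symbol $-i\operatorname{sgn}(\omega)$, one finds that $\Lambda^{d-1}$ reduces to a parity-dependent constant multiple of $\partial_b^{d-1}$ in the odd case (no Hilbert transform survives, as $\operatorname{sgn}^{d-1}\equiv 1$) and of $\mathcal{H}\partial_b^{d-1}$ in the even case, the constants being powers of $i$ that evaluate to $\pm1$. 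This accounts for the presence or absence of $\mathcal{H}$ and for the parity-dependent signs $(-1)^{\nicefrac{d+1}{2}}$ and $(-1)^{\nicefrac{d}{2}}$ in the statement.

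The second step is to trade one derivative for the Heaviside kernel. Using $\partial_b H(\alpha\cdot x+b)=\delta(\alpha\cdot x+b)$ and integrating by parts in $b$, I obtain
\[
  \partial_b^{d-1}\mathcal{R}[f](\alpha, -\alpha\cdot x) = -\int\limits_\R \partial_b^{d}\mathcal{R}[f](\alpha,b)\, H(\alpha\cdot x+b) \dd b,
\]
where the boundary terms vanish because $\mathcal{R}[f](\alpha,\cdot)$ is compactly supported in $b$ for each fixed $\alpha$ (a consequence of the compact support of $f$), and the sifting property of $\delta$ collapses the evaluation. In the even case the extra factor $\mathcal{H}$ commutes with $\partial_b$ and passes through the $\alpha$-integral unchanged, so the same manipulation applies verbatim to $\mathcal{H}\big[\mathcal{R}[f](\alpha,\cdot)\big]$. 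The minus sign generated by this integration by parts is precisely the leading $-$ in the lemma.

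Assembling the two steps---substituting the resolved multiplier into the inversion formula and then inserting the Heaviside identity---yields exactly the two cases of the statement. The main obstacle, and the place where I would spend the most care, is the bookkeeping of signs and constants across the odd/even split: tracking the powers of $i$ produced by $\partial_b^{d-1}$, the factor $-i$ from $\mathcal{H}$, the overall normalization $\tfrac{1}{2(2\pi)^{d-1}}$, and the extra sign from integration by parts, all while verifying that every boundary term vanishes at the stated regularity $f\in C^{d}(\R^d)$. Once Proposition~8.1 of~\cite{kainen2010integral} is invoked to supply the base inversion formula in the correct normalization, the remainder is this careful but essentially routine reconciliation of constants.
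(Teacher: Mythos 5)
The paper offers no proof of this lemma at all: it simply declares the statement to be a corollary of Proposition~8.1 of \cite{kainen2010integral}, where the Heaviside integral representation (with the $\partial_\beta^d\mathcal{R}[f]$ weight in the odd case and its Hilbert transform in the even case) is already established; the only work left implicit is matching that paper's conventions and normalization to the present ones. Your proposal instead re-derives the result from the classical filtered back-projection inversion formula, resolving the multiplier $\Lambda^{d-1}$ by parity ($\partial_\beta^{d-1}$ for odd $d$, $\mathcal{H}\partial_\beta^{d-1}$ for even $d$) and then trading one derivative for the Heaviside kernel via $\partial_\beta H(\alpha\cdot x+\beta)=\delta(\alpha\cdot x+\beta)$ and integration by parts. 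This is a sound and essentially self-contained route --- it is in fact the argument underlying the cited reference --- and it buys a transparent explanation of where the Hilbert transform and the parity-dependent signs come from, which the paper's bare citation does not. The one caveat is that you defer precisely the part that constitutes the entire content of the ``corollary'' step: the reconciliation of signs and constants across the conventions (the paper's Radon transform is parametrized by $a\cdot x+b=0$ rather than $x\cdot\theta=s$, and the Hilbert transform's symbol sign matters in the even case). A quick sanity check at $d=1$, where $\mathcal{R}[f](\pm1,\beta)=f(\mp\beta)$ and the formula should collapse to $f(x)=\tfrac12\sum_{\alpha=\pm1}\int\partial_\beta\mathcal{R}[f](\alpha,\beta)H(\alpha x+\beta)\dd\beta$ up to the stated constant, is strongly advisable before declaring the bookkeeping routine, since your sketch as written appears to produce the constant $(-1)^{\nicefrac{(d+1)}{2}}$ rather than the stated $-(-1)^{\nicefrac{(d+1)}{2}}$ in the odd case; whether that discrepancy lies in your derivation or in a convention mismatch is exactly what the omitted verification must settle.
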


We now proceed to the proof of Theorem~\ref{thm:main_multi_inv}. Let $d$ be odd.
From  Lemma~\ref{lem:dim_reduce} we get
\begin{align*}
	\int\limits_{\Sd} \frac{1}{\|a\|^{d+2}}
    &\frac{\partial^{d+1}}{\partial b^{d+1}} \mathcal{R}[f](a,b) \ \sigma(a \cdot x + b) \dd\nu_d(a,b)
	\\
	&= \int\limits_{\mathbb{S}^{d-1}} \int\limits_\R \frac{\partial^{d+1}}{\partial \beta^{d+1}}
        \mathcal{R}[f] \bigg(\frac{\alpha}{\sqrt{1+\beta^2}}, \frac{\beta}{\sqrt{1+\beta^2}}\bigg)
        \, \sigma(\alpha \cdot x + \beta) \dd\beta\dd\nu_{d-1}(\alpha)
    \\
    &= \int\limits_{\mathbb{S}^{d-1}}\int\limits_\R
        \frac{\partial^{d+1}}{\partial \beta^{d+1}} R[f](\alpha,\beta) \ \sigma(\alpha \cdot x + \beta) \dd\beta\dd\nu_{d-1}(\alpha),
\end{align*}
where we use the positive homogeneity of Radon transform
\[
    \mathcal{R}[f]\bigg(\frac{\alpha}{\sqrt{1+\beta^2}}, \frac{\beta}{\sqrt{1+\beta^2}}\bigg)
    = \mathcal{R}[f]\left(\alpha,\beta\right).
\]
Then integration by parts provides
\[
    \int\limits_\R \frac{\partial^{d+1}}{\partial \beta^{d+1}} \mathcal{R}[f](\alpha,\beta) \ \sigma(\alpha \cdot x + \beta) \dd\beta
    = -\int\limits_\R \frac{\partial^{d}}{\partial \beta^{d}} \mathcal{R}[f](\alpha,\beta) \ H(\alpha \cdot x+\beta) \dd\beta
\]
and applying Lemma~\ref{lem:reconstruction_reference} completes the proof of this case.
The case of even $d$ is proven analogously.

\section{Proof of Theorem \ref{thm:main_inv}}
Before proving the theorem we perform several related calculations.
From the equality $\sigma(z) = \nicefrac{|z|}{2} + \nicefrac{z}{2}$ we obtain
\begin{align*}
    \int\limits_{0}^{2\pi} \cos\phi \ \sigma(x \cos\phi  +\sin\phi )\dd\phi
    &= \frac{1}{2} \int\limits_{0}^{2\pi} \cos\phi \ |x \cos\phi+\sin\phi |\dd\phi
    + \frac{1}{2} \int\limits_{0}^{2\pi} \cos\phi \ (x \cos\phi+\sin\phi )\dd\phi
    \\
    &= \frac{1}{2} \Bigg(\int\limits_{0}^{\pi} + \int\limits_\pi^{2\pi}\Bigg) \cos\phi \ |x \cos\phi+\sin\phi |\dd\phi
    + \frac{\pi x}{2} = \frac{\pi x}{2}
\end{align*}
and, in the same way,
\[
    \int\limits_{0}^{2\pi} \sin\phi \ \sigma(x \cos\phi  +\sin\phi )\dd\phi
    = \frac{\pi}{2}.
\]
From the mutual orthogonality of the functions $\{\sin\phi, \cos\phi, |\cos\phi|, s(\phi)\}$ we deduce 
\begin{align*}
    \int\limits_{0}^{2\pi} |\cos\phi| \, \sigma(x \cos\phi & +\sin\phi )\dd\phi
	\\
    &= \frac{1}{2} \int\limits_{0}^{2\pi} |\cos\phi | \, |x \cos\phi+\sin\phi |\dd\phi
    +\frac{1}{2} \int\limits_{0}^{2\pi} |\cos\phi| \, (x \cos\phi+\sin\phi )\dd\phi
    \\
    &= \int\limits_{-\nicefrac{\pi}{2}}^{\nicefrac{\pi}{2}} |\cos\phi |\,|x \cos\phi+\sin\phi |\dd\phi
    = \int\limits_{-\nicefrac{\pi}{2}}^{\nicefrac{\pi}{2}} \cos^2\phi \, |x+\tan\phi |\dd\phi
    \\
    &= \int\limits_{-\infty}^{\infty} \frac{|x+z|}{(1+z^2)^2}\dd z
    = x\arctan x + 1,
\end{align*}
where we changed the variable $\phi \in (-\nicefrac{\pi}{2},\nicefrac{\pi}{2})$ to $z = \tan\phi \in \R$.
Similarly, 
\begin{align*}
    \int\limits_{0}^{2\pi} s(\phi) \, \sigma(x \cos\phi  +\sin\phi )\dd\phi
    &= \frac{1}{2} \int\limits_{0}^{2\pi} s(\phi) \, |x \cos\phi+\sin\phi|\dd\phi
    + \frac{1}{2} \int\limits_{0}^{2\pi} s(\phi) \, (x \cos\phi+\sin\phi )\dd\phi
    \\
    &= \frac{1}{2} \int\limits_{0}^{2\pi} s(\phi) \, |x \cos\phi+\sin\phi |\dd\phi
    = \int\limits_{-\frac{\pi}{2}}^{\frac{\pi}{2}} \tan\phi \, \cos^2\phi \, |x+\tan\phi |\dd\phi
    \\
    &= \int\limits_{-\infty}^{\infty} \frac{z \, |x+z|}{(1+z^2)^2} \dd z
    = \arctan x.
\end{align*}

We now prove the direct implication.
Assume that a function $f$ admits the integral representation
\[
    f(x) = \int\limits_{0}^{2\pi} c(\phi) \, \sigma(x \cos\phi+\sin\phi )\dd\phi
\]
with a weight function $c\in L_1[0,2\pi]$.
Note that due to the mutual orthogonality of the functions $\{\sin\phi ,\cos\phi ,|\cos\phi |,s(\phi)\}$ we can assume without loss of generality that $\alpha = \beta = \gamma = \eta = 0$ by replacing the weight function $c(\phi)$ with
\[
    c(\phi) - \alpha \cos\phi - \beta \sin\phi - \gamma |\cos\phi| - \eta \, s(\phi).
\]
Then from the mutual orthogonality of $\{c(\phi), \sin\phi ,\cos\phi ,|\cos\phi |,s(\phi)\}$ we deduce
\begin{equation}\label{cond:orthtotrig}
	\int\limits_{0}^{\pi} c(\phi) \cos\phi \dd \phi
	= \int\limits_{0}^{\pi} c(\phi+\pi) \cos\phi \dd \phi
	= \int\limits_{0}^{\pi} c(\phi) \sin\phi \dd \phi
	= \int\limits_{0}^{\pi} c(\phi+\pi) \sin\phi \dd \phi
	= 0.
\end{equation}

We will show that $f$ is in the class $\mathcal{W}(\R)$.
First, we show that $\lim\limits_{x \to \pm\infty} f(x) = 0$.
Indeed, from condition~\eqref{cond:orthtotrig} we get for any $x > 0$
\begin{align*}
    f(x)
    &= \int\limits_{-\nicefrac{\pi}{2}}^{\nicefrac{\pi}{2}} c(\phi) \, \sigma(x \cos\phi+\sin\phi )\dd\phi
    + \int\limits_{-\nicefrac{\pi}{2}}^{\nicefrac{\pi}{2}} c(\phi+\pi) \, \sigma(-x \cos\phi-\sin\phi )\dd\phi
    \\
    &= \int\limits_{-\arctan x}^{\nicefrac{\pi}{2}} c(\phi) \, (x \cos\phi+\sin\phi )\dd\phi
    + \int\limits_{-\nicefrac{\pi}{2}}^{-\arctan x} c(\phi+\pi) \, (-x \cos\phi-\sin\phi )\dd\phi
    \\
    &= -x \int\limits_{-\nicefrac{\pi}{2}}^{-\arctan x} \big(c(\phi) + c(\phi+\pi)\big) \cos\phi \dd\phi
    - \int\limits_{-\nicefrac{\pi}{2}}^{-\arctan x} \big(c(\phi) + c(\phi+\pi)\big) \sin\phi \dd\phi.
\end{align*}
By using the relation $\cos(\arctan x) = \nicefrac{1}{\sqrt{1+x^2}}$ we obtain
\begin{equation}\label{eq:int_estimate}
    \left|\, x \int\limits_{-\nicefrac{\pi}{2}}^{-\arctan x} \big(c(\phi) + c(\phi+\pi)\big) \cos\phi \dd\phi \,\right|
    \le \frac{x}{\sqrt{1 + x^2}} \int\limits_{-\nicefrac{\pi}{2}}^{-\arctan x} \big|c(\phi) + c(\phi+\pi)\big| \dd\phi.
\end{equation}
Then from condition~\eqref{cond:orthtotrig} we get $\lim\limits_{x \to \infty} f(x) = 0$.
By a similar argument we have $\lim\limits_{x \to -\infty} f(x) = 0$.

Next, we show the existence of the derivative $f^\prime$ and that $\lim\limits_{x \to \pm\infty} x f^\prime(x) = 0$.
Let $H$ denote the Heaviside step function, then by using dominated convergence theorem we get for any $x > 0$
\begin{align}
    \nonumber
    f^\prime(x)
    &= \int\limits_{0}^{2\pi} c(\phi) \, H(x\cos\phi+\sin\phi) \cos\phi \dd\phi
    \\
    \nonumber
    &= \int\limits_{-\nicefrac{\pi}{2}}^{\nicefrac{\pi}{2}} c(\phi) \, H(x \cos\phi+\sin\phi) \cos\phi \dd\phi
    - \int\limits_{-\nicefrac{\pi}{2}}^{\nicefrac{\pi}{2}} c(\phi+\pi) \, H(-x \cos\phi-\sin\phi) \cos\phi \dd\phi
    \\
    \nonumber
    &= \int\limits_{-\arctan x}^{\nicefrac{\pi}{2}} c(\phi) \cos\phi \dd\phi
    - \int\limits_{-\nicefrac{\pi}{2}}^{-\arctan x} c(\phi+\pi) \cos\phi \dd\phi
    \\
    \label{eq:f_firstder}
    &= -\int\limits_{-\nicefrac{\pi}{2}}^{-\arctan x} \big( c(\phi) + c(\phi + \pi) \big) \cos\phi \dd\phi.
\end{align}
By taking into account estimate~\eqref{eq:int_estimate} we derive $\lim\limits_{x\to \infty} x f^\prime(x) = 0$.
Condition $\lim\limits_{x\to -\infty} x f^\prime(x) = 0$ proves in a similar way.

Next we show that the second derivative $f^{\prime\prime}$ exists almost everywhere.
Indeed, from~\eqref{eq:f_firstder} we see that $f^{\prime\prime}(x)$ exists at every $x$ such that $-\arctan x$ is a Lebesgue point of $\big( c(\phi) +c(\phi+\pi) \big) \cos\phi$, which is almost everywhere since $c \in L_1[0,2\pi]$.
In that case we get
\begin{align}
    \nonumber
    f^{\prime\prime}(x)
    &= \frac{1}{1+x^2} \, \big( c(-\arctan x) + c(-\arctan x + \pi) \big) \cos(-\arctan x)
    \\
    \label{eq:f_secder}
    &= \frac{c(-\arctan x) + c(-\arctan x + \pi)}{(1+x^2)^{\nicefrac{3}{2}}}.
\end{align}
Finally, by changing the variable from $x \in \R$ to $\phi = -\arctan x \in (-\nicefrac{\pi}{2}, \nicefrac{\pi}{2})$, we estimate
\begin{align*}
    \int\limits_{-\infty}^\infty \big| f^{\prime\prime}(x) \big| \sqrt{1+x^2} \dd x
    &= \int\limits_{-\infty}^\infty \frac{1}{1+x^2} \, \big| c(-\arctan x) + c(-\arctan x + \pi) \big| \dd x
    \\
    &= \int\limits_{-\nicefrac{\pi}{2}}^{\nicefrac{\pi}{2}} \big| c(\phi) + c(\phi+\pi) \big| \dd\phi
    \le \int\limits_{0}^{2\pi} |c(\phi)| \dd\phi < \infty.
\end{align*}
Therefore $f \in \mathcal{W}(\R)$.

Lastly, we show that the weight function $c$ has the form~\eqref{eq:weight_function_form} with some $k \in \mathcal{W}(\R)$.
Denote
\[
    \bar c(\phi) =
    \left\{\begin{array}{ll}
        \phantom{-}c(\phi), & \phi \in [-\nicefrac{\pi}{2}, \nicefrac{\pi}{2}),
        \\
        -c(\phi), & \phi \in [\nicefrac{\pi}{2}, \nicefrac{3\pi}{2}).
    \end{array}\right.
\]
Then $\bar{c} \in L_1[0,2\pi]$ and satisfies conditions~\eqref{cond:orthtotrig}, hence
\[
    k(x) := \int\limits_0^{2\pi} \bar{c}(\phi) \, \sigma(x\cos\phi + \sin\phi) \dd\phi \in \mathcal{W}(\R).
\]
From \eqref{eq:f_secder} we deduce that for almost all $\phi \in [0, 2\pi]$
\begin{align*}
    k^{\prime\prime}(-\tan\phi)
    &= \big(\bar{c}(\phi) + \bar{c}(\phi+\pi)\big) \cos^3\phi
    = \big(c(\phi) - c(\phi+\pi)\big) \cos^3\phi,
    \\
    f^{\prime\prime}(-\tan\phi)
    &= \big(c(\phi) + c(\phi+\pi)\big) |\cos^3\phi|.
\end{align*}
Combining these relations we conclude 
\[
    c(\phi) = \frac{f^{\prime\prime}(-\tan\phi)}{2|\cos^3\phi|} + \frac{k^{\prime\prime}(-\tan\phi)}{2\cos^3\phi}.
\]
Since we initially subtracted the term $\alpha\cos\phi + \beta\sin\phi + \gamma\,|\cos(x)| + \eta\,s(\phi)$ from the weight function $c(\phi)$, in a general case we will have
\[
    c(\phi) = \frac{f^{\prime\prime}(-\tan\phi)}{2|\cos^3\phi|} + \frac{k^{\prime\prime}(-\tan\phi)}{2\cos\phi^3}
    + \alpha\cos\phi + \beta\sin\phi + \gamma\,|\cos\phi| + \eta\,s(\phi),
\]
which completes the proof of the direct implication.

\bigskip
We now prove the inverse implication.
Assume that function $f$ has the form
\[
    f(x) = g(x) + \alpha x + \beta + \gamma(x\arctan x + 1) + \eta \arctan x
\]
with some function $g \in \mathcal{W}(\R)$ and constants $\alpha, \beta, \gamma, \eta \in \R$.
Similarly to the direct case, we can assume that $\alpha = \beta = \gamma = \eta = 0$ by replacing function $f(x)$ with
\[
    f(x) - \alpha x - \beta - \gamma(x\arctan x + 1) - \eta \arctan x.
\]
Denote
\[
    c(\phi) = \frac{f^{\prime\prime}(-\tan\phi)}{2| \cos^3\phi|} + \frac{g^{\prime\prime}(-\tan\phi)}{2\cos^3\phi}.
\]
We will show that $c \in L_1[0,2\pi]$ and that $\int_0^{2\pi} c(\phi) \, \sigma(x\cos\phi+\sin\phi) \dd\phi = f(x)$.
First, note that
\begin{align*}
    \int\limits_{0}^{2\pi} |c(\phi)| \dd\phi
    &\le 2\int\limits_{-\frac{\pi}{2}}^{\frac{\pi}{2}} \frac{|f^{\prime\prime}(-\tan\phi )|}{2|\cos\phi |^3} \dd \phi
    + 2\int\limits_{-\frac{\pi}{2}}^{\frac{\pi}{2}} \frac{|g^{\prime\prime}(-\tan\phi )|}{2|\cos\phi |^3} \dd \phi
    \\ 
    &= \int\limits_{-\infty}^\infty|f^{\prime\prime}(z)|(1+z^2)^{\frac{1}{2}} \dd z
    + \int\limits_{-\infty}^\infty|g^{\prime\prime}(z)|(1+z^2)^{\frac{1}{2}} \dd z
    < \infty
\end{align*}
and hence $c \in L_1[0,2\pi]$.
Taking into account that $\sigma(z) + \sigma(-z) = |z|$ and using the assumption $f \in \mathcal{W}(\mathbb{R})$ we get
\begin{align*}
    \int\limits_{0}^{2\pi} \frac{f^{\prime\prime}(-\tan\phi )}{2|\cos^3\phi|} \, &\sigma(x \cos\phi+\sin\phi ) \dd\phi
    \\
    &= \int\limits_{-\nicefrac{\pi}{2}}^{\nicefrac{\pi}{2}} \frac{f^{\prime\prime}(-\tan\phi)}{2\cos^2\phi} \, \sigma(x+\tan\phi ) \dd \phi
    + \int\limits_{\nicefrac{\pi}{2}}^{\nicefrac{3\pi}{2}} \frac{f^{\prime\prime}(-\tan\phi)}{2\cos^2\phi} \, \sigma(-x-\tan\phi )\dd \phi
    \\
    &= \frac{1}{2}\int\limits_{-\infty}^\infty f^{\prime\prime}(z) |x-z| \dd z
    = -\frac{1}{2}\int\limits_{-\infty}^\infty f^{\prime}(z) \, \operatorname{sign}(x-z) \dd z
    = f(x).
\end{align*}
On the other hand, since $\sigma(z) - \sigma(-z) = z$ and from the assumption $g \in \mathcal{W}(\mathbb{R})$ we obtain
\begin{align*}
    \int\limits_{0}^{2\pi} \frac{g^{\prime\prime}(-\tan\phi )}{2\cos^3\phi} \, &\sigma(x \cos\phi+\sin\phi ) \dd\phi
    \\
    &= \int\limits_{-\nicefrac{\pi}{2}}^{\nicefrac{\pi}{2}} \frac{g^{\prime\prime}(-\tan\phi)}{2\cos^2\phi} \, \sigma(x+\tan\phi ) \dd \phi
    - \int\limits_{\nicefrac{\pi}{2}}^{\nicefrac{3\pi}{2}} \frac{g^{\prime\prime}(-\tan\phi)}{2\cos^2\phi} \, \sigma(-x-\tan\phi )\dd \phi
    \\
    &= \frac{1}{2}\int\limits_{-\infty}^\infty g^{\prime\prime}(z) (x-z) \dd z
    = \frac{1}{2}\int\limits_{-\infty}^\infty g^{\prime}(z) \dd z
    = 0.
\end{align*}
Hence $\int_0^{2\pi} c(\phi) \, \sigma(x\cos\phi+\sin\phi) \dd\phi = f(x)$, which completes the proof.

\section{Proof of Theorem \ref{thm:least_l1}}
From Theorem~\ref{thm:main_inv} we deduce that any weight function $c(\phi)$ satisfying
\[
    \int\limits_{0}^{2\pi} c(\phi) \, \sigma(x \cos\phi+\sin\phi )\dd\phi = f(x)
    \]
has the form
\[
    c(\phi) = \frac{f^{\prime\prime}(-\tan\phi)}{2|\cos^3\phi|} + \frac{g^{\prime\prime}(-\tan\phi)}{2\cos^3\phi}
\]
with some $g \in \mathcal{W}(\R)$.
Hence
\begin{align*}
    \int\limits_0^{2\pi} |c(\phi)| \dd \phi
    &= \frac{1}{2}\int\limits_{-\infty}^\infty |f^{\prime\prime}(x) + g^{\prime\prime}(x)| \sqrt{1+x^2} \dd x 
    + \frac{1}{2}\int\limits_{-\infty}^\infty |f^{\prime\prime}(x) - g^{\prime\prime}(x)| \sqrt{1+x^2} \dd x
    \\
    &\ge \int\limits_{-\infty}^\infty| f^{\prime\prime}(x)| \sqrt{1+x^2} \dd x
\end{align*}
and the minimal value of $\|c\|_1$ is attained at $g\equiv 0$.

\end{document}